\newcommand{\etal}[0]{\textit{et al.}}
\theoremstyle{plain}
\newtheorem{defn}{Definition}
\newtheorem{property}{Property}
\newtheorem{proposition}{Proposition}
\theoremstyle{definition}
\newtheorem{example}{Example}
\DeclarePairedDelimiter\abs{\lvert}{\rvert}%
\begin{document}

\title[Article Title]{Multi-User Personalisation in Human-Robot Interaction: Resolving Preference Conflicts Using Gradual Argumentation}


\author[1]{\fnm{Aniol} \sur{Civit}}\email{acivit@iri.upc.edu}

\author[1]{\fnm{Antonio} \sur{Andriella}}\email{aandriella@iri.upc.edu}

\author[2]{\fnm{Carles} \sur{Sierra}}\email{sierra@iiia.csic.es}

\author[1]{\fnm{Guillem} \sur{Alenyà}}\email{galenya@iri.upc.edu}

\affil[1]{\orgdiv{Institut de Robòtica i Informàtica Industrial}, \orgname{CSIC-UPC}, \orgaddress{\street{Llorens i Artigas 4-6}, \city{Barcelona}, \postcode{08028}, \country{Spain}}}

\affil[2]{\orgdiv{Institut d'Investigació en Intel·ligència Artificial}, \orgname{IIIA-CSIC}, \orgaddress{\street{Campus de la UAB}, \city{Cerdanyola del Vallès}, \postcode{08193}, \country{Spain}}}


\abstract{
    While personalisation in Human-Robot Interaction (HRI) has advanced significantly, most existing approaches focus on single-user adaptation, overlooking scenarios involving multiple stakeholders with potentially conflicting preferences.
    To address this, we propose the Multi-User Preferences Quantitative Bipolar Argumentation Framework (MUP-QBAF), a novel multi-user personalisation framework that extends Quantitative Bipolar Argumentation Frameworks (QBAFs) to model and resolve preference conflicts. Unlike prior work in Argumentation Frameworks, which typically assumes static inputs, our approach incorporates the users' arguments and the robot’s dynamic observations of the environment, allowing the system to continuously adapt and respond to changing contexts. Preferences are represented as arguments whose strength is recalculated iteratively based on new information.
    The framework’s properties and capabilities are presented and validated through a realistic case study, where an assistive robot mediates between the conflicting preferences of a caregiver and a care recipient during a frailty assessment task. This evaluation includes a sensitivity analysis of argument base scores, demonstrating how preference outcomes can be shaped by user input and contextual observations. By offering a transparent, structured, and context-sensitive approach to resolving competing user preferences, this work advances the field of multi-user HRI, providing a principled alternative to data-driven methods, enabling robots to navigate conflicts in real-world environments.}

\keywords{Assistive Robots, Multi-User Personalisation, Gradual Argumentation, Conflict Resolution}



\maketitle

\section{Introduction}

    The effectiveness of personal robots largely depends on their ability to autonomously adjust to the diverse needs of individual users~\cite{Yang_hri_ws23}. This requires considering various factors, such as cultural background, personal preferences, and both cognitive and physical capabilities~\cite{Gasteiger_IJSR23}. Personalisation emerges as a key requirement for enhancing Human-Robot Interaction (HRI) in real-world settings as it has proven to improve engagement and foster trust and rapport~\cite{Smarr_HFESAM12, Di_Napoli_UMUAI23, Hofstede_FRAI25}.
    
    Most previous research has primarily focused on addressing preferences from a single user~\cite{Moro_THRI18, Park_aaai19, Falerni_RCIM24}. However, in many robotic contexts, preferences may also arise from multiple stakeholders involved in the usage of the technology. For instance, in healthcare, one of the most relevant use cases for assistive robotics, robots must account for not only patients' preferences but also those of therapists and doctors~\cite{Trainum_hri24}. In existing approaches, caregivers' preferences and knowledge are typically integrated into the robot's decision-making a priori, often through co-design sessions, which may include the care recipients~\cite{Bardaro_IJSR22, Gebelli_roman24}. However, scenarios where robots can autonomously decide while considering competing preferences of multiple users remain largely unexplored~\cite{Bernatskiy_alife14, Civit_ceurws24}. Furthermore, in these scenarios, the preferences that robots must consider include not only the \textit{positive} preferences (what the agents desire) but also the \textit{negative} preferences (what the agents reject)~\cite{Benferhat_kr02}, which have been largely overlooked in HRI research. 

    \begin{figure}[tbp!]
        \centering
        \includegraphics[width=1\columnwidth]{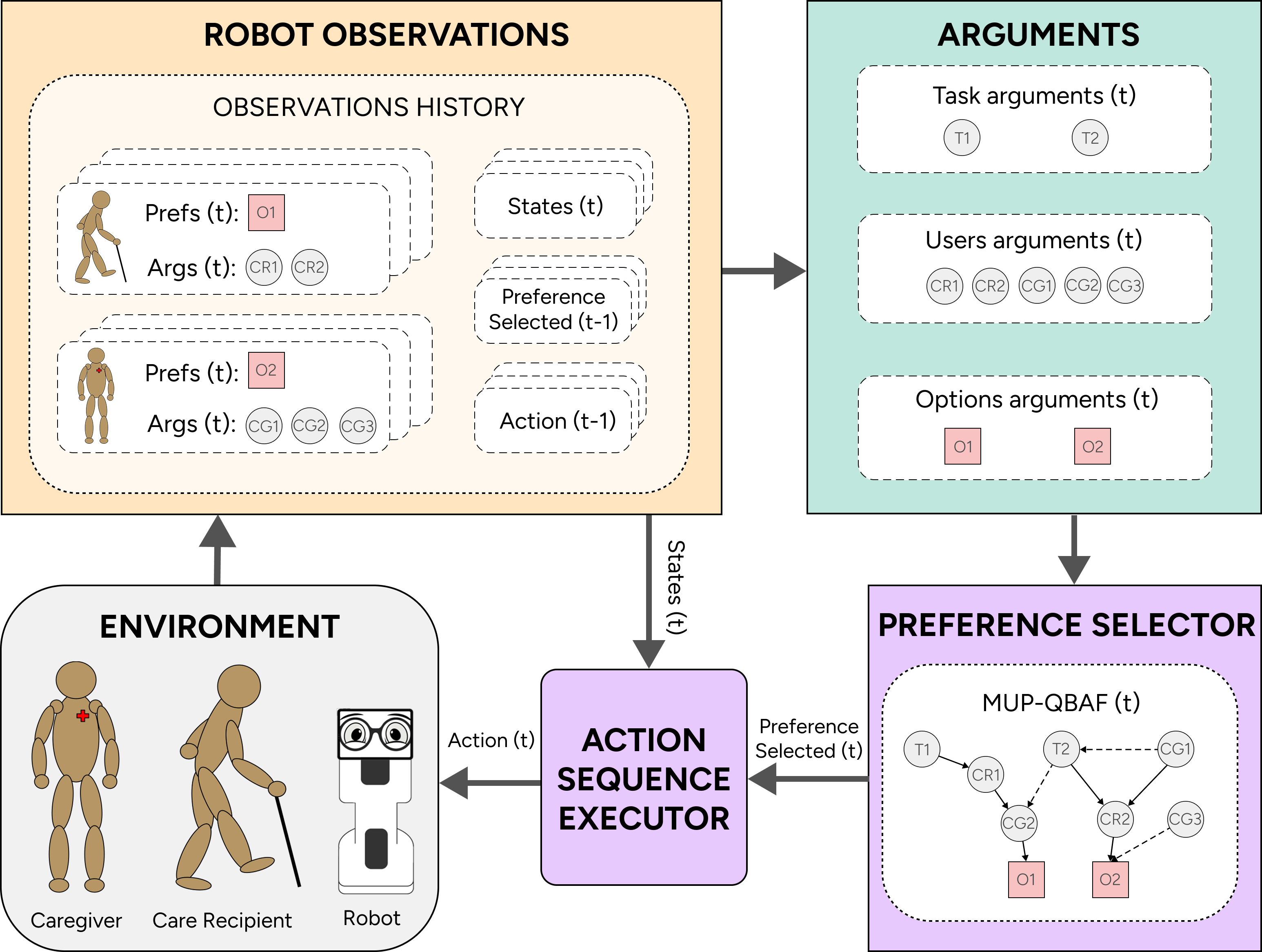}
        \caption{Iterative closed-loop multi-user preference selection system. The robot processes observations (\textbf{Robot Observations Module}), such as environmental states and user inputs to generate tasks and users' arguments, and the possible decisions (their preferences), which are stored in the \textbf{Arguments Module}. These are structured in the \textbf{Preference Selector Module}, where the arguments are combined into support/attack relations. The \textbf{MUP-QBAF} (Argumentation Framework) outputs a selected preference that directs robot action. Importantly, the environmental feedback from this action generates new observations, iteratively updating the argumentation framework.}
        \label{fig:robot_decision_intro}
    \end{figure}

    Addressing these multifaceted challenges, integrating competing, positive, and negative preferences from multiple stakeholders, demands a computational framework that can explicitly represent and reason about conflict in a transparent manner. While data-driven approaches, such as machine learning, have proven effective for learning individual user preferences~\cite{Andriella_UMUAI25}, they present several limitations in this context. Firstly, their ``black-box'' nature makes it difficult to explain why a particular decision was made, which is a critical factor for establishing trust and acceptance in HRI~\cite{Sado_ACM23, Love_roman24}. Secondly, these models handle conflict implicitly by learning a statistical mapping, rather than by explicitly reasoning about the arguments for and against a course of action~\cite{Macagno_18}. Finally, they often require substantial amounts of interaction data, making them less suitable for scenarios with new users or dynamically changing preferences~\cite{Carroll_arxiv24}.

    Computational Argumentation (CA) offers a promising approach to addressing this challenge. Specifically, it offers a structured method for resolving conflicts by evaluating competing reasons. To address this, CA employs Argumentation Frameworks (AFs), in which a set of arguments (representing reasons for or against a decision) is connected through relational structures, where some arguments may attack or support others~\cite{Amgoud_AI09}. By explicitly modelling users’ preferences as arguments and analysing their relationships, the system can determine the most reasonable decision, even in cases of disagreement.

    This work introduces the Multi-User Preferences Quantitative Bipolar Argumentation Frameworks (MUP-QBAFs), a \textbf{novel framework that enables robots to resolve conflicts when multiple users express competing preferences}. The proposed method extends Quantitative Bipolar Argumentation Frameworks (QBAFs) to select a preference by evaluating users' stated preferences, their underlying reasons, and dynamic environmental observations, all modelled as arguments within the framework. This gradual argumentation approach enables a possible fine-tuning of users' importance of each argument, and avoids the current limitation in abstract argumentation, which is the loss of information due to the framework reductions~\cite{Battaglia_icsum24}. The selected preference then parameterises the robot's decision-making system. 
    This framework offers additional advantages for HRI. Firstly, a key property of AFs is their representation as a graph, where the arguments are the nodes and the edges are their relations, producing transparent decisions~\cite{Rago_kr20}, unlike opaque learning-based methods. Transparency in the decision-making process is fundamental to providing explanations to users, which is crucial in real-world HRI. Secondly, AFs are inherently adaptable; arguments can be added or modified dynamically, enabling immediate decision updates without retraining, making them suitable for long-term interactions where preferences evolve over time~\cite{Irfan_UMUAI23}. Finally, while most of the literature in AFs assumes static decisions, HRI introduces a unique requirement: iterative decision-making, that is, making the same decision several times during an interaction. The outcome of the decision can change from one iteration to another due to the actions of the robot, new observations, or new arguments provided by the users~\cite{Ferretti_AI17}.
    These dynamic properties are defined and demonstrated through a case study, in which an assistive robot must reconcile conflicting preferences between a caregiver and a care recipient during an assistive task. Finally, the possible outcomes of the framework are analysed, complementing it with the study of the arguments' base score. The overall system architecture is illustrated in Fig.~\ref{fig:robot_decision_intro}.

    In summary, we contribute to the literature by:

    \begin{itemize}
        
        \item Presenting a novel framework for representing the multi-user preference conflicts, its properties, benefits and capabilities through the means of an assistive robot scenario example.
        \item Proposing a novel algorithm, based on QBAFs, for endowing robots with the capability of resolving multi-user preference conflicts, adaptable to the possibly changing context, and extended to the case where preferences can be positive or negative.
        
    \end{itemize}

\section{Preliminaries}

    The following section first briefly introduces the formal definitions and evolution of AFs, and then continues with the definitions, capabilities, and properties of the QBAFs, which is the framework used.

    \subsection{Introduction of Argumentation Frameworks}

        Abstract Argumentation Frameworks are a source for transparent practical reasoning that has been widely studied in the Artificial Intelligence field. They were first introduced and defined by Dung:
        
        \begin{defn}\textnormal{\cite{Dung_JLP95}} An Argumentation Framework is represented as a pair $\mathcal{AF} = \langle\mathcal{A,R^-\rangle}$ where $\mathcal{A}$ is the set of arguments, and $R^-$ is a binary relation on $\mathcal{A}$ and $R^-\subseteq\mathcal{A}\times\mathcal{A}$. The meaning of $\alpha R^-\beta$ is that the argument $\alpha$ attacks $\beta$.
        \end{defn}
        
        With the attack relation between arguments, the concept of defence appeared. For example, in Fig.~\ref{fig:example_attacks}, A attacks B and B attacks C, thus A is defending C since it attacks an argument that attacks C. With those definitions, the semantics for accepting arguments were introduced, being a set of arguments $S\subseteq\mathcal{A}$:

        \begin{itemize}

        \item \textit{Conflict-free:} $S$ is conflict-free iff there is no $B, C \in \mathcal{S}$ such that $B$ attacks $C$.
    
        \item \textit{Admissible:} $S$ is admissible iff it defends all of its elements.
    
        \item \textit{Complete:} $S$ is a complete extension iff it is an admissible set that contains any argument it defends.
    
        \item \textit{Preferred:} $S$ is a preferred extension iff it is a maximal admissible set.
    
        \item \textit{Stable:} $S$ is a stable extension iff it attacks any argument in $\mathcal{A}\setminus S$.
    
        \item \textit{Grounded:} $S$ is a grounded extension iff it is a minimal complete extension.
    
        \item \textit{Ideal:} $S$ is an ideal extension iff it is a maximal admissible set contained in every preferred extension.

        \end{itemize}
        
        One of the most relevant extensions of AFs is the addition of the support relation, since initially, only attacking arguments existed. Those frameworks are known as Bipolar Argumentation Frameworks (BAFs). 
    
        \begin{defn}\textnormal{\cite{Amgoud_IJIS08}}
            An Abstract Bipolar Argumentation Framework $\langle \mathcal{A}, R^-, R^+\rangle$ is a set of $\mathcal{A}$ arguments with binary relations $R^-$ for attacks and $R^+$ for supports.
        \end{defn}
        
        The supporting arguments brought new semantics and definitions of relations, such as direct/indirect attackers or supporters of an argument, the safety sets, which are the ones that do not attack and support the same argument, and new admissible and preferred extensions. In the example from Fig.~\ref{fig:example_supports}, the set \{A, C\} is not safe since the set is attacking and supporting B. 
    

        \begin{figure}[t]
            \begin{subfigure}{0.47\textwidth}
                \centering
                \includegraphics[width=1\linewidth]{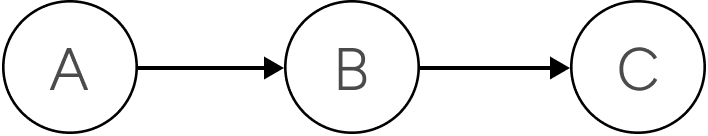}
            \caption{AF with A attacking B and B attacking C.}
            \label{fig:example_attacks}
            \end{subfigure}
            \hfill
            \begin{subfigure}{0.47\textwidth}
                \centering
                 \label{af:supp_att}
                 \includegraphics[width=1\linewidth]{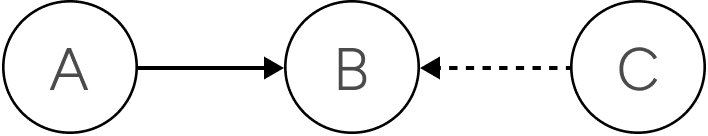}
            \caption{AF with A attacking B and C supporting B.}
             \label{fig:example_supports}
            \end{subfigure}
            \caption{Argumentation Framework Examples.}
            \label{fig:af_examples}
        \end{figure}

    \subsection{Quantitative Bipolar Argumentation Frameworks}
    \label{sec:preliminaries_qbaf}

        The literature in argumentation has been expanded into gradual argumentation, in which the arguments~\cite{Amgoud_ijcai17} or the relations~\cite{Dunne_AI11} are given a weight. The latest extensions in gradual argumentation are Quantitative Bipolar Argumentation Frameworks (QBAF). This framework will be used in this work. In QBAFs, the arguments are assigned an initial weight known as the \textit{base score}, which is used to compute the final strength of each argument.
        
        \begin{defn}\textnormal{\cite{Amgoud_IJAR18}} A QBAF is represented as a quadruple $\langle \mathcal{A}, \tau, \mathcal{R}^+, \mathcal{R}^-\rangle$, being $\tau: \mathcal{A} \rightarrow[0,1]$ the base score function of the arguments $\mathcal{A}$, the attack relations $\mathcal{R}^- \subseteq \mathcal{A}\times\mathcal{A}$ and the support relations $\mathcal{R}^+ \subseteq \mathcal{A}\times\mathcal{A}$ $(\mathcal{R}^- \cap \mathcal{R}^+ = \emptyset)$. If the tuple of arguments $(\beta,\alpha)$ is in $\mathcal{R}^- \ (\mathcal{R}^+)$, then the argument $\beta$ attacks (supports) $\alpha$.\end{defn}

        The strength of an argument represents its robustness against attacks and its acceptance. An example of a QBAF is represented in Fig.~\ref{fig:example_qbaf}. The arguments' final strengths are computed using a \textit{gradual semantics}. 
        
        \begin{defn}\textnormal{\cite{Baroni_IJAR19}} A gradual semantics $\sigma$ is a function that evaluates a QBAF by attributing values $\sigma(\alpha) \in [0,1]$ to every argument $\alpha \in \mathcal{A}$ as their strength.\end{defn}

        \begin{figure}
            \centering
            \includegraphics[width=0.25\linewidth]{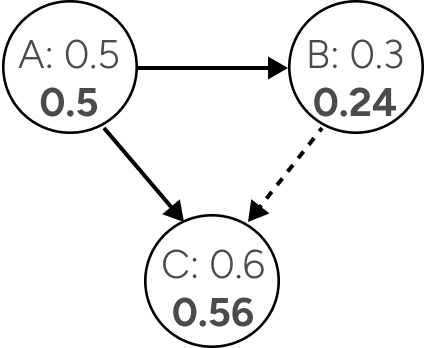}

            \caption{Example of a simple QBAF. The nodes contain the argument name (A, B, C), the base score on top, and the final strength in bold at the bottom. Continuous lines are attacks, and the dashed line is a support.}
            \label{fig:example_qbaf}
        \end{figure}
        
        Different gradual semantics exist in the literature, such as the \textit{DF-QuAD}~\cite{Rago_icpkrr16}, the \textit{Restricted Euler-Based}~\cite{Amgoud_IJAR18}, or the \textit{Quadratic Energy Model}~\cite{Potyka_pkrr18}. The approach used in this work is the Quadratic Energy Model since it satisfies all the properties of gradual evaluation methods~\cite{Amgoud_IJAR18, Potyka_sum19} and additionally, the final strength of the arguments tends to converge in fewer iterations than other approaches in cyclic graphs~\cite{Potyka_comma24}, and time is often crucial in some robotic applications. The converged gradual semantics using the Quadratic Energy Model are computed as: 
    
        \begin{equation}
            \sigma^{Q E}(\alpha)= 
            \begin{cases}
            \tau(\alpha)-\tau(\alpha) \cdot \frac{E_\alpha^2}{1+E_\alpha^2} & \text { if } E_\alpha \leq 0 \\ \tau(\alpha)+(1-\tau(\alpha)) \cdot \frac{E_\alpha^2}{1+E_\alpha^2} & \text { if } E_\alpha>0
            \end{cases}
            \label{eq:final_strength_QE}
        \end{equation}
        where $\alpha$ is any argument of the framework, and $E_\alpha$ is the aggregation strength of all attackers and supporters of $\alpha$, which is computed as:
    
        \begin{equation}
            E_\alpha=\sum_{\left\{\beta \in \mathcal{A} \mid(\beta, \alpha) \in \mathcal{R}^+\right\}} \sigma^{Q E}(\beta)-\sum_{\left\{\beta \in \mathcal{A} \mid(\beta, \alpha) \in \mathcal{R}^-\right\}} \sigma^{Q E}(\beta),
            \label{eq:aggregation_strength_QE}
        \end{equation}
        being $\beta$ the arguments with a direct edge towards $\alpha$. The final strengths typically range between 0 and 1; intuitively, the attackers move the final strength to 0 while the supporters move it to 1. 
    
        The final strength computation of acyclic QBAFs (acyclic graphs) with $\mathcal{R}$ relations ($\mathcal{R}^+ \cup \mathcal{R}^-$) and $A$ arguments converges in at least $n=\abs{A}$ iterations and in linear time $\mathcal{O}(m+n)$, where $m=\abs{R}$ and $n=\abs{A}$~\cite{Potyka_aamas19}. For cyclic QBAFs, it is less intuitive, and instead of computing the final strengths following the arguments' topological order towards the root arguments, the strengths are computed dynamically until convergence. 



        The apparition of these frameworks introduced new semantics and properties to the literature. Gradual evaluations (should) satisfy a set of properties that are well-defined in the literature~\cite{Amgoud_IJAR18, Baroni_IJAR19}. These properties dictate how arguments attack and support each other, as well as their influence on other arguments, in order to achieve a coherent evaluation of arguments. In other words, these properties include: 
        \begin{itemize}
            \item \textit{Anonymity}: Strength values should not depend on the identity of the argument. Two isomorphic QBAFs with the same base scores will have the same strengths.
            \item \textit{Independence}: Disconnected subgraphs should not affect each other.
            \item \textit{Directionality}: The final strength of an argument depends exclusively on its base score and its predecessors' strength.
            \item \textit{Equivalence}: Arguments with equal status should be equally evaluated.
            \item \textit{Stability}: If an argument has no attackers or supporters, its strength is its base score.
            \item \textit{Neutrality}: Arguments with 0 strength do not influence others.
            \item \textit{Monotony}: Adding attackers (supporters) only weakens (strengthens) an argument.
            \item \textit{Reinforcement}: Strengthening attackers (supporters) or weakening (strengthening) supporters only weakens (strengthens) an argument.
            \item \textit{Resilience}: Extreme values (0 and 1) cannot be taken unless the base score was already an extreme value.
            \item \textit{Franklin}: An attacker and a supporter with the same strength cancel each other.
            \item \textit{Weakening (Strengthening)}: An argument's strength must be smaller (bigger) than its base score when attackers (supporters) dominate the supporters (attackers).
        \end{itemize}

        Those and other properties from the literature were grouped into more general properties in~\cite{Baroni_IJAR19}, named \textit{balance} and \textit{monotonicity}. In a nutshell, balance expresses the idea that if attackers and supporters are equally strong, then the argument's final strength should be equal to the base score, and if attackers (supporters) are stronger, then the strength should not be more (less) than the base score. Monotonicity captures the notion that if you strengthen an argument's base score or its supporters, or weaken its attackers, the final strength should not decrease.
        
        Afterwards, a couple more properties have been added:
        \textit{Duality}~\cite{Potyka_pkrr18}, attacks and supports are treated equally; and \textit{Open-mindedness}~\cite{Potyka_sum19}, the strength of an argument can become close to 0 or 1 independently of its base score if there is enough magnitude of attackers or supporters. In particular, the Quadratic Energy Model, which is used in our framework, satisfies all those properties.

\section{Related work}

    Personalisation has become key for effective HRI, with extensive research demonstrating successful implementations through machine learning, adaptive control systems, and behavioural optimisation. However, these approaches have primarily addressed single-user scenarios, leaving multi-user preference conflicts systematically unaddressed. AFs emerge as a promising solution to bridge this gap, offering formal mechanisms for conflict resolution through structured reasoning. In the following, we review the main contributions to the state of the art.

    \textbf{Single-user preferences:} Traditional methods for adapting a robot's behaviour to a user's preferences or conditions encompass various disciplines within artificial intelligence. Central to this is the concept of user modelling, where the robot builds a representation of the user's characteristics to inform its actions. This model can be based on a variety of information, including demographic data~\cite{Maroto_EAAI23}, skill level in a training context~\cite{Schadenberg_CSR17}, or real-time behavioural cues observed during an interaction~\cite{Dell_EAAI22}. Among these methods, Reinforcement Learning (RL) is by far the most used approach in assistive robotics. For instance, Park~\etal\cite{Park_aaai19} used RL to maximise children's learning retention in long-term interactions with a robot teacher, where the reward function is designed to account for both children's engagement and their learning gains. 
    Similarly, Tsiakas~\etal\cite{Tsiakas_MDPI18} proposed an RL framework where the reward function combines explicit feedback, like task performance, with implicit feedback from the user's observed task engagement.
    However, designing a reward function that accounts for several heterogeneous parameters can be challenging. Therefore, other studies propose using Inverse Reinforcement Learning (IRL), which infers the underlying reward function from expert demonstrations rather than requiring it to be specified manually.
    Following this principle, Sugiyama~\etal\cite{Sugiyama_interspeech12} learn the appropriate reward function by modelling a user's pairwise decisions, effectively inferring preferences from their choices between two options.
    More recently, Andriella~\etal\cite{Andriella_UMUAI23} introduced the CARESSER framework, designed to actively learn personalised robot-assistive behaviour for cognitive training therapy. Here, the reward function was learnt by combining a therapist's expertise and their demonstrations, after which a unique policy was estimated using classical RL approaches.
    Beyond learning-based methods, classical AI planning has also been employed to achieve robot adaptiveness. This approach typically involves creating a symbolic representation of the user's needs, which is then integrated directly into the planning domain. A recent work by De Benedictis~\etal\cite{Benedictis_UMUAI23} developed a system for personalising cognitive stimulation for older adults. Here, the users' needs, capabilities, and preferences were directly represented in the planning knowledge base. Canal~\etal\cite{Canal_AR18} proposed a symbolic high-level task planning for assistive shoe dressing, in which the user's preferences were gathered from answering simple questions and integrated into the planning domain to generate appropriate assistance. A significant and common limitation of the aforementioned approaches is their foundation in a single-user paradigm. Whether it is a manually engineered reward function for RL, a set of expert demonstrations for IRL, or a coherent set of preferences for a planner, these methods are all designed to create a unified model tailored to an individual user. This paradigm is fundamentally challenged in multi-user scenarios where conflicting preferences make decisions more difficult, especially because preferences are often subjective for each user and challenging to quantify. 

    The introduction of Foundation Models and Large Language Models (LLMs) has allowed robots to better interpret user inputs and their surroundings. For instance, Wu~\etal\cite{Wu_AR23} proposed a method in which the user preferences are known and input into the prompt, and a robot in charge of household cleaning is capable of adapting its actions according to the user's preferences. Differently, in the work presented by Jenamani~\etal\cite{Jenamani_arxiv24}, a robot-assisted feeding system interprets verbal inputs to obtain the user's preferences, and also interprets the visual context, to generate a plan for feeding the user. Even though those works are based on single-user preferences, LLMs could be used to select between multiple users' preferences, but the approach would be subject to potential hallucinations or biases in their training and would lack transparent decisions if used end-to-end~\cite{Shen_EAAI25}.

    \textbf{Multi-users preferences:} Some research has addressed this issue by developing multi-agent, multi-objective systems that shape utility functions to resolve conflicts between agents with competing goals. For instance, Ren~\etal\cite{Ren_aps23} proposed the Binary Branching Multi-Objective Conflict-Based Search algorithm for finding the Pareto-optimal frontier. The work proposed by Huang~\etal\cite{Huang_ESA25} utilises a Multi-Agent Reinforcement Learning approach considering uncertainty for air traffic management, taking into account possible conflicting trajectories. They incorporate a reward-shaping mechanism to enrich the reward mechanism. Approaches using Game Theory have also been explored for solving this issue, such as the work from Bashir~\etal\cite{Bashir_IJIS21}, where they design a conflict resolution model considering all the parties and the payoff of different strategies and achieve more realistic and accurate results. These works, however, assume complete and quantitative information, lacking qualitative reasoning and intuitive explanations for their outcomes. Additionally, they depend on manually specified utility functions, which significantly limits their adaptability to diverse or evolving user preferences. There exist works in which preference conflicts are considered, for instance, the work from Bernatskiy~\etal\cite{Bernatskiy_alife14} adapts a robot behaviour according to multiple users' preferences, including possible conflict scenarios. They use evolutionary algorithms for learning the preferences and selecting the proper robot behaviour. Their objective is to optimise a process that requires multiple choices for a robot's behaviour. In case of preference conflict, they check the robot's performance for each of the preferences and select the most optimal one. Another work, which does not use robots, but includes multiple stakeholders with possibly competing preferences, is the one from Shrivastava~\etal\cite{Shrivastava_EAAI25}, in which a recommender system uses a deep learning method to learn the preferences, and then a multi-objective approach optimises the recommendations based on some objective parameters, such as consumption, retention, and exposure. Both approaches use objective data to make decisions or recommendations, since their task is clear in terms of whether it is being properly performed or optimised, unlike in some HRI preference conflicts.

    \textbf{Computational argumentation in robotics:} The application of CA techniques in robotics is an emerging but promising area. While argumentation has seen extensive use in adjacent fields like conversational AI and chatbots for managing dialogue and explaining decisions~\cite{Castagna_JAIR24}, its adoption in HRI is less developed. In the robotics literature, a handful of studies have begun to explore this potential. Wang~\etal\cite{Wang_mfi14}, for example, used an AF to allow a service robot to reason over a single user's preferences and explain its subsequent actions. In a study presented by Azhar~\etal\cite{Azhar_roman16}, an argumentation-based dialogue system was developed that enabled a human and a robot to collaboratively reach a decision. Although these studies demonstrate the value of argumentation for human-robot decision-making, they are fundamentally limited to single-user scenarios and do not provide a mechanism for resolving conflicts between multiple human stakeholders.     
    To the best of our knowledge, the work presented by Ferretti~\etal\cite{Ferretti_AI17} is the first to use dynamic features of the environment in AFs for robot decision-making. A robot that cleans a store uses a dynamic AF created from the scenario configuration and some rules to make decisions and finish the task. The work presented by Ayoobi~\etal\cite{Ayoobi_TASE21} proposes that online incremental argumentation can be used for the robot to handle recoveries from failures originating from changes in the environment, proving to be generalised with few observations in large state spaces, contrary to RL and contextual bandits approaches.

    \textbf{Research Gap:} The presented literature is limited by the following: firstly, it mostly involves a single user and their preferences for developing a task, while in our work, multiple users' preferences are considered. Secondly, prior research typically assumes objective scenarios with measurable optimal outcomes. However, in HRI, optimality is inherently subjective and can vary across users. Our approach explicitly accounts for these subjective preferences in the robot's decision process. Thirdly, current HRI systems that address conflict resolution lack mechanisms for generating transparent decisions, while our framework provides transparency by design. Finally, although some studies incorporate dynamic arguments based on the robot or task status, they overlook the possibility that the arguments and preferences of the users might change over time, a key challenge in HRI that our framework effectively addresses.

\section{Multiple users' preference selection system}
    This section introduces the modules (see  Fig.~\ref{fig:robot_decision_intro}) of our framework used to select a preference in a scenario with multiple users with possibly competing preferences for a specific action. 
    Firstly, the robot observations and their history generate the task and users' arguments in the arguments module (see Sec.~\ref{sec:robot_observations_and_arguments_module}). Next, these arguments are fed into the preference selector module formalised as a QBAF to determine the selected preference (see Sec.~\ref{sec:preference_selector}). Finally, the selected preference is input to the action sequence executor module to parameterise the action. It is important to note that developing a decision-making system to select the best action in a given state is outside the scope of this work.
    
    For a clearer understanding, in the following section, we define several key terms: \textit{Features} refer to the attributes that the robot can choose from, \textit{Options} are the possible values that each feature can assume, and \textit{Preferences} indicate the users' preferred options for each feature. For example, a feature of the robot could be its velocity, with options such as fast, slow, or very slow. One user may prefer the robot to move quickly, while another might prefer a slower pace.

    \subsection{Robot observations and arguments modules}
    \label{sec:robot_observations_and_arguments_module}

        In an HRI setting, it is crucial to know the user’s preferences and their reasons, as well as gather contextual information to generate adaptive robot behaviour, an expected characteristic in robotic domains. For this reason, this module is in charge of obtaining the users' preferences and arguments, and also of observing the context or some indicators of the users or task status. Those observations, and their changes over time, are sent to the arguments module, becoming arguments, since they can be meaningful for the decision. Overall, there are three types of arguments:

        \begin{itemize}
             \item \textit{Users Arguments}: Arguments provided by the users. These arguments reflect their preferences over the available options. An example of a user argument is: \textit{``I do not want to repeat the test because I do not feel safe performing it''}.
             \item \textit{Task Arguments}: Arguments whose origin is based on robot observations. This type of argument is a key and distinguished characteristic when applying argumentation to HRI, since in this context, the environment and the user's state are dynamic and can impact the robot's observations. They can be defined by a user or through a consensus among multiple experts, for example, in a participatory design process. An example of a task argument is: \textit{``Risk of falling detected during the test execution''}.
             \item \textit{Option Arguments}: Arguments that represent the different options for a given feature. They are the possible outputs of the preference selector module. To ensure fairness, these arguments are assigned equal base scores. A pair of \textit{Option Arguments} can be: \textit{``Repeat test''} and \textit{``Not repeat test''}.
         \end{itemize}

    \subsection{Preference selector module}
    \label{sec:preference_selector}
        This module serves as the core computational engine for resolving conflicts and determining the robot's course of action. Upon receiving the users' preferences and arguments, the module initiates a four-stage process. Firstly, it analyses the preferences to detect any conflicts (see Sec.~\ref{sec:preferences_types_conflicts}). Second, it formalises the conflict using our proposed AF (see Sec.~\ref{sec:mup_qbaf_framework}). Third, a dedicated algorithm processes this framework to select a winning preference and decide the parameters of the robot's action (see Sec.~\ref{sec:preference_selector_algorithm}). Finally, we demonstrate that this decision-making process adheres to several properties that ensure its responsiveness in dynamic contexts (see Sec.~\ref{sec:dynamic_decision_properties}).

    \subsubsection{Preferences and conflict types}
    \label{sec:preferences_types_conflicts}

        There are three possible categorical preferences a user can have for a given option. They can be positive (preferred, represented as $+$), negative (undesirable, represented as $-$), or indifferent (without any preference, represented as $0$)~\cite{Bistarelli_soft05}.  
        
        We consider a context with multiple \textit{Option Arguments} $O$, multiple users $U$, and where the preferences are defined as $p: U  \to  2^{O \times \{ +, -, 0 \}}$, such that if $(o,s),(o,s') \in p(i) $ then $s = s'$. We extend $p(i)$ for all $i \in U$ by defining $\hat{p}$ in the following way:

        \begin{enumerate} 
        \item $(o,s) \in \hat{p}(i)$ if $(o,s) \in p(i)$
        \item $(o,0) \in \hat{p}(i)$ if $(o,\_) \notin p(i)$
        \item nothing else is in $\hat{p}(i)$.
        \end{enumerate}



        With this formulation of the preferences, the scenarios without conflicting preferences (NC) are the following:

        \begin{enumerate}[label=(NC.\arabic*), leftmargin=*, align=left, labelwidth=!, itemindent=0pt, widest=9]

            \item All users have the same preferences:
                \begin{equation*}
                \label{eq:no_conflict_1}
                    \forall \ i,j \in U. \ \hat{p}(i) = \hat{p}(j) 
                \end{equation*}
            \item There is one option preferred by some, and the rest are (1) unopposed to it and (2) do not show a preference for any option:
                \begin{equation*}
                \label{eq:no_conflict_2}
                    \exists o \in O. \ \exists U' \subseteq U. \ |U'| \geq 1 \land \forall i \in U'. \ (o, +)\in \hat{p}(i) \land  \forall o' \in O .  \ o \neq o' \to 
                \end{equation*}
                \begin{equation*}
                    (o',+) \notin \hat{p}(i) \land \big(\forall j \in U \setminus U'. \ (o, 0) \in \hat{p}(j) \land (o', +) \notin \hat{p}(j) \big)
                \end{equation*}

            \item There is no option preferred by any user, but there is at least one option that is indifferent for all:

            \begin{equation*}
            \label{eq:no_conflict_3}
                 \forall i \in U .\ \big(\forall o \in O.\ (o,+)\notin P_i \land  \exists o'\in O.\ (o',0) \in \hat{p}(i)\big).
            \end{equation*}

        \end{enumerate}

        When none of the previous conditions are satisfied, the scenarios with conflicting preferences (C) are:

        \begin{enumerate}[label=(C.\arabic*), leftmargin=*, align=left, labelwidth=!, itemindent=0pt, widest=9]

        \item There is an option that is preferred and disliked by different users: 
        \begin{equation*}
        \label{eq:conflict_1}
             \exists o \in O.\ \exists \ i,j \in U. \ i \neq j \land (o,+)\in \hat{p}(i) \land (o,-) \in \hat{p}(j) 
        \end{equation*}
        
        \item There are at least two agents where one prefers an option that is not preferred by the other one:
        \begin{equation*}
        \begin{split}
        \label{eq:conflict_2}
            \exists \ i,j \in U. \  \exists \ o,o'\in O.\  o' \neq o \ \land \ i \neq j \ \land \ (o,+) \in \hat{p}(i) \ \land \ (o',+) \in \hat{p}(j) \\
            \land \  (o,+) \notin \hat{p}(j) \ \land \ (o',+) \notin \hat{p}(i)
        \end{split}
        \end{equation*}

        \item There are at least two agents where one rejects an option that is not rejected by the other one:
        \begin{equation*}
            \begin{split}
            \label{eq:conflict_3}
                \exists \ i,j \in U. \  \exists \ o,o'\in O.\  o' \neq o \ \land \ i \neq j \ \land \ (o,-) \in \hat{p}(i) \ \land \ (o',-) \in \hat{p}(j) \\
                \land \  (o,-) \notin \hat{p}(j) \ \land \ (o',-) \notin \hat{p}(i).
            \end{split}
        \end{equation*}
        \end{enumerate}

        Since this work uses QBAFs to resolve preference conflicts, a user favouring an option will naturally provide arguments supporting it, while likely attacking arguments for the alternatives. Conversely, a user who finds an option undesirable will attack its supporting arguments and likely provide support for others.

    \subsubsection{The MUP-QBAF framework}
    \label{sec:mup_qbaf_framework}

        The arguments received from the arguments module, and defined in the previous section (see Sec.~\ref{sec:robot_observations_and_arguments_module}), are used to construct a Multi-User Preferences Quantitative Bipolar Argumentation Framework (MUP-QBAF):

        \begin{defn} 
            A MUP-QBAF framework is a 7-tuple $\langle \mathcal{A}, \tau, \mathcal{R}^+, \mathcal{R}^-,O, U,  \hat{p} \rangle $ such that:
            \begin{itemize}
                \item $\mathcal{A}$ is a finite set of arguments;
                \item $\tau: \mathcal{A} \to [0,1]$ is the base score function for the arguments;
                \item $\mathcal{R}^-\subseteq \mathcal{A} \times \mathcal{A} \ (\mathcal{R}^+)$ are the attack (support) relations;
                \item $O \subseteq \mathcal{A}$ are arguments representing options; 
                \item $U$ is the set of users;
                \item $\hat{p}: U \to  2^{O \times \{ +, -, 0 \}}$ are the users' preferences for the options $O$;
            \end{itemize}
        \end{defn}
        
        The structure of a MUP-QBAF is considered a set of trees with as many roots as available options. That is defined as follows~\cite{Rago_icpkrr23}:

        \begin{defn}
            Let $\mathcal{M}$ be a MUP-QBAF $\langle \mathcal{A}, \tau, \mathcal{R}^+, \mathcal{R}^-,O, U,  \hat{p} \rangle$. For any arguments $a,b\in \mathcal{A}$, let a path from $a$ to $b$ be defined as a sequence of relations $(c_0,c_1),...,(c_{n-1},c_n)$ of length $n>0$ where $c_0=a$ and $c_n=b$, and, for any $1\leq i \leq n, (c_{i-1}, c_i)\in \mathcal{R}^+ \cup \mathcal{R}^-$. Then, given a set of \textit{Option Arguments} $\subseteq\mathcal{A}$, $\mathcal{M}$ is a MUP-QBAF for $O$ iff i) $\nexists a\in \mathcal{A}\setminus \{ O\}$ such that $\exists o \in O$ where $(o, a)\in \mathcal{R}^+\cup\mathcal{R}^-$ ii) $\forall a\in \mathcal{A} \setminus \{O \}$ there is a path from $a$ to at least one $o \in O$; and iii) $\nexists a \in \mathcal{A}$ with a path from $a$ to $a$.
            \label{defn:structure_mup_qbaf}
        \end{defn}

        With these definitions established, it is now possible to define an algorithm to select between multiple users' conflicting preferences.
    
    \subsubsection{Multi-user preference selector algorithm}
    \label{sec:preference_selector_algorithm}

        Once the framework is initialised, we propose Alg.~\ref{algorithm:decision_making} to select between the different options for that specific action. 
        The algorithm takes as input a MUP-QBAF framework $\mathcal{M}$ and computes the gradual semantics $\sigma$ of its arguments with the evaluation method $\nu$. 


        \begin{algorithm}[tbp!]
            \KwIn{
                 $\mathcal{M}=\langle \mathcal{A}, \tau, \mathcal{R}^+, \mathcal{R}^-,O, U,  \hat{p} \rangle $,
                 $\nu:\{'Quadratic \ Energy', 'DF-QuAD'\}$,
                 $g:2^O \times \mathcal{M} \times \nu \to O$
                }
            \KwOut{$o\in O$}

                $ \sigma  \xleftarrow{} compute\_gradual\_semantics(\nu, \mathcal{M})$\;
                \If {C.1 or C.2 or C.3}{
                    $o^* \xleftarrow{}  g (\{o | \sigma(o) = \max_{o' \in O} \sigma(o')\}, \mathcal{M}, \nu)$\;
                    \Return $o^*$
                }
                \Else{
                    $O^+ \leftarrow\{o |\exists i\in U.(o,+) \in \hat{p}(i)\}$\;
                    $O^0 \leftarrow\{o |\exists i\in U.(o,0) \in \hat{p}(i)\}$\;
                    $O^- \leftarrow\{o |\exists i\in U.(o,-) \in \hat{p}(i)\}$\;
                    \Switch{No Conflict}{
                        \Case{NC.1}{ 
                            \uIf{$|O^+| \geq 1$}{
                                \Return $g(\{o | o\in O^+ \land \sigma(o) = \max_{o' \in O^+} \sigma(o')\}, \mathcal{M}, \nu)$
                            }
                            \ElseIf{$|O^0| \geq 1$}{
                                \Return $g(\{o | o\in O^0 \land \sigma(o) = \max_{o' \in O^0} \sigma(o')\}, \mathcal{M}, \nu)$
                            }
                            \uElse{
                                \Return $g(\{o | o\in O^- \land \sigma(o) = \max_{o' \in O^-} \sigma(o')\}, \mathcal{M}, \nu)$
                            }
                        }
                        \Case{NC.2}{
                            \Return $g(\{o | o\in O^+ \land \sigma(o) = \max_{o' \in O^+} \sigma(o')\, \mathcal{M}, \nu)$
                        }
                        \Case{NC.3}{
                        \tcc{The $O^+$ set is empty}
                            \Return $g(\{o | o\in O^0\setminus O^- \land \sigma(o) = \max_{o' \in O^0\setminus O^-} \sigma(o')\   , \mathcal{M}, \nu)$
                        }
                       
                    }
                }
            \caption{Multi-User Preferences Conflict Resolution (MUPCR)}
            \label{algorithm:decision_making}
        \end{algorithm}

        The first step of the algorithm is to calculate the final strengths of the \textit{MUP-QBAF's} arguments (line 1). The next step consists of checking if any of the conditions for conflicting scenarios from Sec.~\ref{sec:preferences_types_conflicts} are satisfied (line 2). In case of conflict, the algorithm will select an option $o^*\in O'$ by running $g$ (line 3), which is a function that selects among the set of options $O' \subseteq O$ with the highest strength: $o^* = g \{o | \sigma(o) = \max_{o' \in O} \sigma(o'), \mathcal{M}, \nu\}$. Those options with the maximum strength are input into the $g$ function, which selects an option given some criteria. If there is only one option in $g$, the conflict is considered resolved, and the selected option is that one. 

        \begin{property}
            \textnormal{(Conflict Resolved)} Given a MUP-QBAF $\mathcal{M}$ and a gradual semantics $\sigma$, the conflict between the preferences is resolved iff $\ \exists o, o' \in O. \forall o'\neq o \to \sigma(o) > \sigma(o')$.
        \end{property}

        In contrast, if the function $g$ has more than one option as input, the algorithm detects a tie and tries other arbitrary methods to select an option. That arbitrary method has to be decided within the context, e.g., the robot could ask for user feedback to update the argumentation framework to break the tie (see Alg.~\ref{algorithm:g_update_framework}), or could return the most efficient option from the available options.

        \begin{algorithm}[t]
            \KwIn{
                $O' \subseteq  O,\mathcal{M}=\langle \mathcal{A}, \tau, \mathcal{R}^+, \mathcal{R}^-,O, U,  \hat{p} \rangle, \nu $
                }
            \KwOut{$o^*\in O'$}

                \If{$|O'|=1$}{
                    \Return $o\in O'$}
                \Else{
                    \tcc{Ask users for more arguments to break the tie}
                    $\mathcal{M}'$ $\xleftarrow{}$ $update\_framework(\mathcal{M})$\;
                    \Return $MUPCR(\mathcal{M}', g_1, \nu)$
                }
            \caption{Example of $g$ - Robot Conflict Resolution which Proactively Updates Argumentation Framework in Ties}
            \label{algorithm:g_update_framework}
        \end{algorithm}

    \subsubsection{Properties of MUP-QBAFs dynamic adaptation}
        \label{sec:dynamic_decision_properties}

        A key challenge in HRI is to enable robots to immediately adapt to dynamic environments and user preferences. Traditional decision-making approaches often rely on static models or require retraining, limiting their responsiveness. Our work introduces a novel application of QBAFs to HRI, uniquely addressing scenarios in which: (i) observations are dynamic and context-dependent, robot actions (e.g., administering a test) directly influence the environment and user state, generating new arguments in real time; (ii) multi-stakeholder preferences must be reconciled, the robot balances conflicting inputs from users (e.g., caregivers, care recipients) while accounting for task-critical observations (e.g., fall risks).
        
        We will provide evidence that introducing new arguments into the MUP-QBAF alters the difference in the final strengths of the \textit{Option Arguments}. To achieve this, we will expand upon the definitions of pros and cons of an option argument within our framework, based on Def.~\ref{defn:structure_mup_qbaf}~\cite{Rago_icpkrr23}.
    
            \begin{defn}
                Let $\mathcal{M}$ = $\langle \mathcal{A}, \tau, \mathcal{R}^+, \mathcal{R}^-,O, \mathcal{U}, \hat{p} \rangle $ be any MUP-QBAF. Then, the con arguments and pro arguments for an option $o\in O$ are:
                \begin{itemize}
                    \item $pro(o) = \{ a \in \mathcal{A} \mid \exists p \in paths(a, o), where \ \abs{ p \cap \mathcal{R}^-}\ is \ even\}.$
                    
                    \item $con(o) = \{ a \in \mathcal{A} \mid \exists p \in paths(a, o), where \ \abs{ p \cap \mathcal{R}^-} \ is \ odd \}.$
                \end{itemize}
            \end{defn}
    
        By definition, an argument can present both pros and cons regarding the same \textit{Option Argument}. This situation can occur when there are multiple paths leading from the argument to the \textit{Option Argument}. However, the nature of MUP-QBAF suggests the involvement of more than one user, each with a preference that the algorithm uses to make a selection. As a result, the arguments they present will support their preferred options (pros) or oppose the unpreferred ones (cons).
        
        \begin{defn}
            For any MUP-QBAF, let $i\in U$ be a user, and $\mathcal{A}_i \subseteq \mathcal{A}$ be that \textnormal{user's arguments}, given an Option Argument o, a \textnormal{consistent user} is defined as:
            \begin{itemize}
                \item if $(o,+) \in \hat{p}(i): \exists\alpha \in \mathcal{A}_i.\alpha\in pro(o)$
                
                \item if $(o,-)\in \hat{p}(i): \exists\alpha \in \mathcal{A}_i.\alpha\in con(o)$
                \item if $(o,0) \in \hat{p}(i): 
                \big(\exists\alpha \in \mathcal{A}_i.\alpha\in con(o) \land \ \exists \beta \in \mathcal{A}_i. \beta \in pro(o) \big) \lor \big(\nexists\alpha \in \mathcal{A}_i.\alpha\in con(o) \land \ \nexists \beta \in \mathcal{A}_i. \beta \in pro(o)\big)$ 
                
            \end{itemize}
        \end{defn}
        The pros and cons arguments influence the strength differences between the \textit{Option Arguments} in a MUP-QBAF. By introducing a sufficient number of pros or cons arguments, the algorithm may alter its selection. The following property illustrates how pro and con arguments affect the strength differences of the \textit{Option Arguments}.

        \begin{property}\label{property:argument_attribution_discrimination}
           \textnormal{(Argument Addition Option Discrimination)}
           A gradual semantics $\sigma$ satisfies \textnormal{Argument Addition Option Discrimination} iff for any MUP-QBAF $\mathcal{M}$, a modified version of it $\mathcal{M}'$ with an argument $\alpha$  such that $\mathcal{A'}=\mathcal{A} \ \cup \alpha,$ $\tau'(\alpha)>0,$ $\tau'(\beta) = \tau(\beta)$ for all $\beta \in \mathcal{A}$, and a given options $(o_1,o_2) \in O$ such that $\sigma(o_1), \sigma(o_2) \in (0,1)$: If $\alpha \in pro(o_1) \lor \alpha \in con(o_2)$, and $\exists \gamma \in \mathcal{A}.(\alpha,\gamma)\in \mathcal{R}^+ \lor (\alpha,\gamma)\in \mathcal{R}^-.\sigma(\gamma)\in(0,1)$ then $\sigma'(o_1)-\sigma'(o_2)>\sigma(o_1) - \sigma(o_2)$.

        \end{property}

        \begin{proposition}
            The Quadratic Energy, the Euler-based, and the DF-QuAD models satisfy \textnormal{Argument Addition Option Discrimination}.
        \end{proposition}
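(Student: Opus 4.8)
The plan is to exploit the acyclic, tree-shaped structure of a MUP-QBAF (Def.~\ref{defn:structure_mup_qbaf}): the options are the roots, influence flows from the leaves toward the roots, and there are no cycles, so the final strengths can be computed in topological order and the addition of a new source argument $\alpha$ perturbs only the strengths of the arguments lying on paths from $\alpha$ toward the options. I would reduce the claim to the two symmetric cases of the hypothesis. In case (a), $\alpha \in pro(o_1)$, I show $\sigma'(o_1) > \sigma(o_1)$ while $\sigma'(o_2) = \sigma(o_2)$ (the perturbation stays inside the sub-tree of $o_1$); in case (b), $\alpha \in con(o_2)$, I show $\sigma'(o_2) < \sigma(o_2)$ while $\sigma'(o_1) = \sigma(o_1)$. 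In both cases $\sigma'(o_1) - \sigma'(o_2) > \sigma(o_1) - \sigma(o_2)$, which is the conclusion.

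The core of each case is a propagation argument carried out in the following order. First, since $\alpha$ is added as a source argument with $\tau'(\alpha) > 0$, by \emph{Stability} $\sigma'(\alpha) = \tau'(\alpha) > 0$, so passing from $\mathcal{M}$ to $\mathcal{M}'$ is exactly the addition of a supporter or attacker of positive strength to the argument $\gamma$ with $(\alpha,\gamma) \in \mathcal{R}^+ \cup \mathcal{R}^-$ and $\sigma(\gamma) \in (0,1)$. Second, I invoke the \emph{strict} form of \emph{Monotony}: because $\sigma(\gamma) \in (0,1)$ rules out the saturated regime in which the semantics is locally constant, $\sigma'(\gamma)$ is strictly above $\sigma(\gamma)$ if $(\alpha,\gamma) \in \mathcal{R}^+$ and strictly below it if $(\alpha,\gamma) \in \mathcal{R}^-$. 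Third, I propagate the perturbation along the path $\gamma = c_0, c_1, \dots, c_n = o$ from $\gamma$ to the relevant option $o$ by induction on $i$: at step $i$ the predecessor $c_{i-1}$ has strictly changed strength in a known direction, hence is a strictly stronger or weaker supporter/attacker of $c_i$, and the \emph{strict} form of \emph{Reinforcement} moves $\sigma'(c_i)$ off $\sigma(c_i)$ in the direction dictated by whether $(c_{i-1}, c_i)$ is a support (sign preserved) or an attack (sign flipped) --- this sign bookkeeping is precisely the even/odd counting of attacks in the definitions of $pro$ and $con$, so a pro-path terminates with $\sigma'(o) > \sigma(o)$ and a con-path with $\sigma'(o) < \sigma(o)$. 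Fourth, I verify that these strict \emph{Monotony}/\emph{Reinforcement} steps hold for each named semantics: for the Quadratic Energy and the Euler-based models it suffices that a strict change of the aggregate $E_{c_i}$ induces a strict change of $\sigma(c_i)$ in the same direction whenever $\sigma(c_i) \in (0,1)$, which is immediate from their update formulas; for DF-QuAD the same holds with $E_{c_i}$ replaced by the difference of the multiplicative attacker/supporter aggregations, which itself changes strictly whenever a strength in $(0,1)$ is inserted or perturbed.

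The main obstacle is preserving strictness along the entire path rather than only along its first edge: the inductive step stalls at any intermediate argument $c_i$ whose strength in $\mathcal{M}$ is already $0$ or $1$, since there the semantics is flat and the perturbation is absorbed before it reaches $o$. Dealing with this cleanly requires either reading the hypothesis as asking every argument on the chosen path (not just $\gamma$, $o_1$, $o_2$) to have strength in $(0,1)$ --- consistent with the interior conditions already imposed --- or an auxiliary observation that, under Def.~\ref{defn:structure_mup_qbaf}, a non-extreme option strength forces the strengths of the arguments feeding it along the path to be non-extreme as well. A secondary subtlety is that in principle $\alpha$ might reach an option through several paths of mixed parity, lying in both $pro$ and $con$ of the same option and so having ambiguous net effect; this is ruled out by the tree shape together with the natural assumption that the new reason attaches through a single relation $(\alpha,\gamma)$, which makes the $\alpha \to o$ path unique and confines the perturbation to the sub-tree of one option, so that the other option's strength is genuinely untouched. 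Finally, because the Quadratic Energy and Euler-based models share the ``energy aggregation plus sigmoid-like update'' shape while DF-QuAD uses a product aggregation, it is tidiest to prove the strict \emph{Monotony}/\emph{Reinforcement} lemma once for the energy-based pair and separately, by a parallel computation, for DF-QuAD.
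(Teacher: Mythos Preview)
Your proposal follows essentially the same route as the paper's proof: split into the cases $\alpha \in pro(o_1)$ and $\alpha \in con(o_2)$, then propagate the strict change from $\alpha$ toward the relevant option using the monotonicity/reinforcement-type properties that the three semantics satisfy (the paper invokes ``monotonicity'' and an external proposition from Rago et al.\ for this step). Your plan is more explicit about the induction along the path and about the saturation obstacle at intermediate arguments (which the paper handles only with parenthetical caveats like ``considering that attacker's final strength was not already 1''), and you claim $\sigma'(o_2) = \sigma(o_2)$ in the $pro(o_1)$ case whereas the paper more cautiously writes $\sigma'(o_2) \leq \sigma(o_2)$, but the skeleton of the argument is the same.
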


        \begin{proof}
        \label{proof:adding_arguments}
            Following Proposition 2 from~\cite{Rago_icpkrr23}, and since the DF-QuAD, the Euler-based, and Quadratic Energy evaluation methods satisfy the monotonicity property, in the case with $\alpha \in pro(o_1)$, if $\alpha$ directly supports $o_1$, it will increase $o_1$'s final strength, if it is attacking an $o_1$ attacker, it will decrease that attacker's final strength (considering that attacker's final strength was not already 1), impacting positively on $o_1$'s final strength, and if it is supporting an $o_1$'s defender, it will also increase that defender's final strength (considering that defender's final strength wasn't already 1) increasing then $o_1$'s final strength. Given all those, then $\sigma'(o_1) > \sigma(o_1)$, and since $\sigma'(o_2) \leq \sigma(o_2)$, then $\sigma'(o_1)-\sigma'(o_2)>\sigma(o_1) - \sigma(o_2)$. In the case where $\alpha \in con(o_2)$, then $\sigma'(o_2) < \sigma(o_2)$, and since $\sigma'(o_1) \geq \sigma(o_1)$ then $\sigma'(o_1)-\sigma'(o_2)>\sigma(o_1) - \sigma(o_2)$. In the case where $\alpha \in pro(o_1) \land\alpha \in con(o_2)$, by extension of the previous cases, this also achieves $\sigma'(o_1)-\sigma'(o_2)>\sigma(o_1) - \sigma(o_2)$. The property only holds in the cases where $\sigma(o_1), \sigma(o_2) \in (0,1)$, for instance, if $\sigma(o_1)=1$ and $\alpha$ is pro($o_1$), then $\sigma'(o_1)=\sigma(o_1)=1$, and the property is not satisfied. 

        \end{proof}

        It is important to note that this property can also be applied to the process of deleting an argument. Given this property, the robot may change its selected option based on new arguments introduced by users or through the activation of task arguments. A change in selection will occur when the difference between options changes sign, e.g., $\sigma(o_1)-\sigma(o_2) > 0$ and $o_1$ is selected, and after adding new arguments, $\sigma'(o_1)-\sigma'(o_2) < 0$.
        
        Second, we prove that the difference between different \textit{Option Arguments} can change by modifying some arguments' base score. 

        \begin{property}\label{property:base_score_discrimination}
            \textnormal{(Base Score Option Discrimination)}
            A gradual semantics $\sigma$ satisfies \textnormal{Base Score Option Discrimination} iff for any MUP-QBAF $\mathcal{M}$, for any $\mathcal{M}'$ with $\mathcal{A'}=\mathcal{A}, \ \mathcal{R}^{'+}=\mathcal{R}^{+}, \ \mathcal{R}^{'-}=\mathcal{R}^{-} , $ for $\alpha \in \mathcal{A}, \ \tau'(\alpha) > \tau(\alpha)$ while $\forall \beta \in \mathcal{A}\setminus \{\alpha\}.\tau'(\beta)=\tau(\beta), \ $, and given options $o_1,o_2 \in O$, $\sigma(o_1), \sigma(o_2) \in (0,1)$: If $\alpha \in pro(o_1) \lor \alpha \in con(o_2),$  then $\sigma'(o_1)-\sigma'(o_2)>\sigma(o_1) - \sigma(o_2)$.

        \end{property}

        Note that this property can also extend to lowering the base score of an argument.  

        \begin{proposition}
            The Quadratic Energy, the Euler-based, and the DF-QuAD models satisfy \textnormal{Base Score Option Discrimination}.
        \end{proposition}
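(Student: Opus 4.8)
The plan is to reuse almost verbatim the argument given for the preceding proposition on \textnormal{Argument Addition Option Discrimination}, replacing the perturbation ``add an argument $\alpha$ with $\tau'(\alpha)>0$'' by ``raise the base score of an existing argument $\alpha$, i.e. $\tau'(\alpha)>\tau(\alpha)$''. Two facts do the work: (i) a MUP-QBAF is acyclic by Def.~\ref{defn:structure_mup_qbaf}, so its arguments can be evaluated in topological order from the leaves towards the \textit{Option Arguments}; and (ii) the Quadratic Energy, Euler-based and DF-QuAD semantics all satisfy monotonicity/reinforcement, so strengthening a base score, with everything upstream unchanged, cannot decrease an argument's strength and strictly increases it whenever the argument is not pinned at an extreme value. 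I would also invoke Proposition~2 of~\cite{Rago_icpkrr23}, exactly as the companion proof does, to move a strictness statement from $\alpha$ to the options along a pro/con path.

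The first step is to show $\sigma'(\alpha)>\sigma(\alpha)$. Since $\mathcal{R}^{'+}=\mathcal{R}^+$, $\mathcal{R}^{'-}=\mathcal{R}^-$ and $\tau'(\beta)=\tau(\beta)$ for all $\beta\neq\alpha$, acyclicity and a straightforward induction on topological order give that every predecessor of $\alpha$ keeps its strength, hence the aggregate $\alpha$ receives is the same in $\mathcal{M}$ and $\mathcal{M}'$ (for Quadratic Energy this is $E_\alpha$ of Eq.~\ref{eq:aggregation_strength_QE}, with the analogous aggregates for the other two models). Substituting the common aggregate into the update rule (Eq.~\ref{eq:final_strength_QE} for Quadratic Energy, the corresponding rules for Euler-based and DF-QuAD) shows the output is strictly increasing in $\tau(\alpha)$ on the interior: for instance, with $E_\alpha\le 0$ the Quadratic Energy value equals $\tau(\alpha)/(1+E_\alpha^2)$, and with $E_\alpha>0$ it equals $\tau(\alpha)/(1+E_\alpha^2)+E_\alpha^2/(1+E_\alpha^2)$, both strictly increasing in $\tau(\alpha)$. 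Hence $\sigma'(\alpha)>\sigma(\alpha)$.

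The second step propagates this along the membership-witnessing path, splitting into the same three cases as before. If $\alpha\in pro(o_1)$, fix a path from $\alpha$ to $o_1$ with an even number of attack edges and walk it towards $o_1$: by monotonicity and Proposition~2 of~\cite{Rago_icpkrr23}, a strict increase in a supporter, or a strict decrease in an attacker, that is not already at an extreme value produces a strict increase at the next node, and composing along the path yields $\sigma'(o_1)>\sigma(o_1)$; meanwhile the raised weight moves $o_2$ weakly downwards or leaves it untouched, so $\sigma'(o_2)\le\sigma(o_2)$, whence $\sigma'(o_1)-\sigma'(o_2)>\sigma(o_1)-\sigma(o_2)$. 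Symmetrically, if $\alpha\in con(o_2)$ then $\sigma'(o_2)<\sigma(o_2)$ and $\sigma'(o_1)\ge\sigma(o_1)$, and the same inequality follows; if $\alpha$ is both, the two effects reinforce. The remark that lowering $\tau(\alpha)$ also works is obtained by the same walk with the roles of pro/con (and of increase/decrease) exchanged, or directly from the Duality property.

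The delicate point, exactly as in the companion proof, is keeping the perturbation \emph{strict} all the way to the \textit{Option Arguments}: an intermediate attacker or defender already at strength $0$ or $1$ would absorb the change and destroy strictness. This is precisely why the hypothesis $\sigma(o_1),\sigma(o_2)\in(0,1)$ is imposed and why the property genuinely fails at the boundary (e.g.\ if $\sigma(o_1)=1$ and $\alpha\in pro(o_1)$ then $\sigma'(o_1)=\sigma(o_1)=1$). I would dispatch it with the same case distinction, restricting to the non-extreme situations and using Resilience to argue that an interior option forces the relevant intermediate arguments on its pro/con path to remain interior, so that the strict increase indeed reaches $o_1$ (resp.\ the strict decrease reaches $o_2$).
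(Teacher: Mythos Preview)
Your proposal is correct and follows exactly the paper's own approach: the paper's proof of this proposition is a one-line remark that ``the previous proof is also applicable to this property when gradual semantics satisfy the monotonicity property of QBAFs'', and you have simply spelled out that reuse in more detail (first establishing $\sigma'(\alpha)>\sigma(\alpha)$ from strict monotonicity of the update in $\tau(\alpha)$, then propagating along the pro/con path via Proposition~2 of~\cite{Rago_icpkrr23}, with the same boundary caveat). If anything, your treatment is more careful than the paper's about the interior condition on intermediate nodes.
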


        The previous proof is also applicable to this property when gradual semantics satisfy the monotonicity property of QBAFs~\cite{Baroni_IJAR19}. These modifications may be given from user feedback or the robot's own observations.

        The presented system properties are focused on the possibility of changing the framework's output. In long-term interactions, the agents' preferences and arguments might change, and the system must allow and learn those changes. As it has been shown in the previous examples, the proposed system allows \textit{Non-stationary preferences and arguments}: 

        \begin{defn}
            A MUP-QBAF framework allows non-stationary preferences and arguments. Therefore, the agents' preferences and arguments, and consequently the decisions, can change over time.
        \end{defn}

        A relevant observed benefit of using a framework based on QBAFs in HRI contexts is their adaptability to changes, in a single interaction, without retraining to adapt the model to the new context. 

\section{Using MUP-QBAF in robotic frailty assessment}
\label{sec:robotic_frailty_assessment}

    This section presents an illustrative use case demonstrating the practical application and advantages of the proposed MUP-QBAF framework in an HRI scenario. The objective is to highlight the system’s ability to resolve conflicts between users with competing preferences, specifically in the context of a robot performing frailty assessments in older adults. 

    Firstly, the use case, the role of the robot, and the possible conflicts are explained (see Sec.~\ref{sec:clinical_scenario}). Then, an example of a possible AF (see Sec.~\ref{sec:constructing_af}) is provided. Moreover, some examples related to adaptation to dynamic context, which is crucial in HRI, are presented (see Sec.~\ref{sec:dynamic_decision_examples}). Finally, an analysis of the proposed AF in different scenarios, and the sensitivity of the arguments' base score is shown (see Sec.~\ref{sec:framework_evaluation_and_sensitivity_analysis}).

    \subsection{Clinical scenario and motivation}
    \label{sec:clinical_scenario}


    Frailty assessments are standard clinical procedures used to evaluate the physical, cognitive, mental, and functional conditions of older adults. These evaluations help healthcare professionals anticipate adverse outcomes, particularly in response to stressors such as hospitalisation or injury~\cite{Fried_TJG01}. The physical component of frailty is typically measured using standardised tests such as the Short Physical Performance Battery (SPPB)~\cite{Guralnik_JG94} and the Timed Up and Go (TUG)~\cite{Podsiadlo_JAGS91}. The functional domain is assessed through responses to a questionnaire known as the Patient Reported Outcome Measurements (PROM)~\cite{Broderick_Egems13}.

    Assistive robots have the potential and capabilities to conduct this assessment autonomously~\cite{Civit_hri24, Civit_roman24}. They can accompany a patient to an evaluation room, guide them through the different tests, and then provide accurate assessments along with additional relevant frailty-related metrics~\cite{Boumans_IJSR20, Calabrese_IJSR25}. 
    
    Conflicts between the care recipient and the caregiver can arise during the testing process. For example, if a patient performs a test incorrectly or an unexpected event occurs, the caregiver may prefer to repeat the test to ensure the validity of the measurement. In contrast, the care recipient might prefer not to repeat it, citing fatigue, discomfort, a safety issue, or time constraints. This sets up a multi-user preference conflict, where both parties have legitimate but opposing viewpoints. Specifically, the robot is requested to select between two options (\textit{Option Arguments}): repeat ($r$) the test that went wrong or not repeat it ($\neg r$).

    \subsection{Constructing the Argumentation Framework}
    \label{sec:constructing_af}

    \begin{figure*}[t]
        \centering
        \includegraphics[width=1\linewidth]{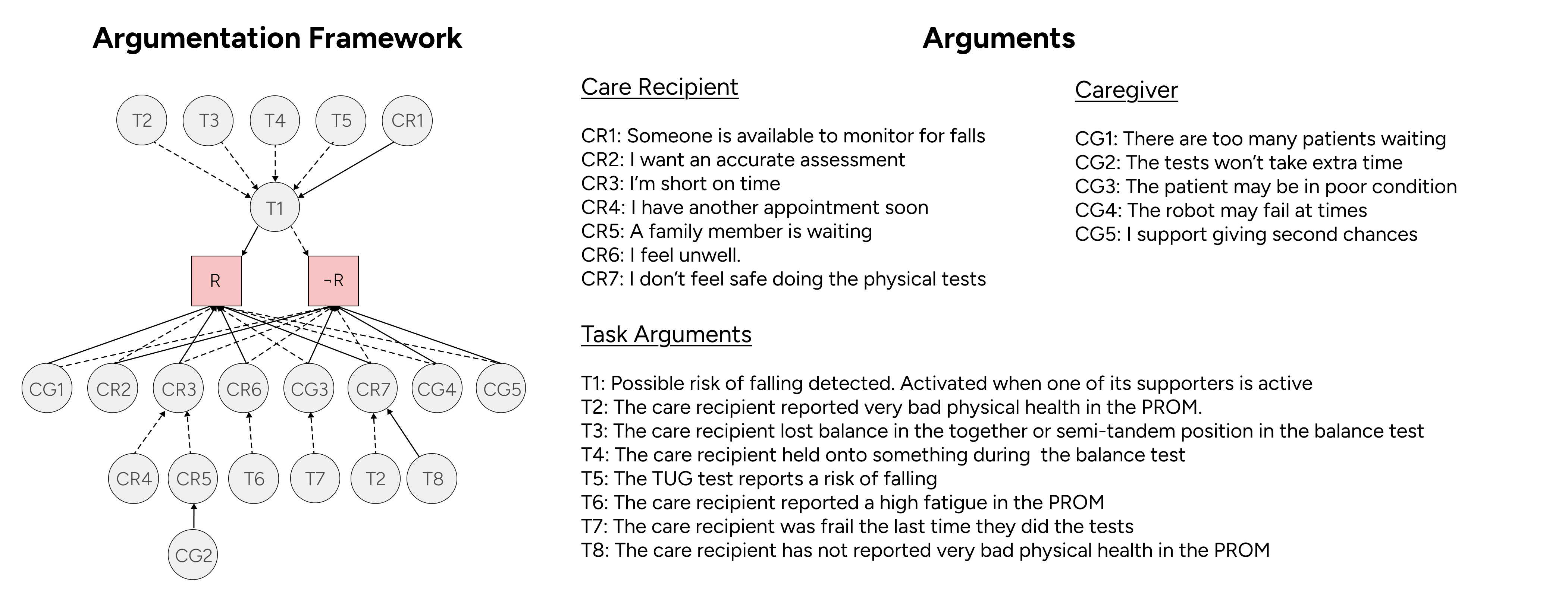}
        \caption{An example of the initial AF for determining whether to repeat a test in frailty assessments is shown on the left, along with the associated arguments on the right. The squared arguments highlighted in red represent the \textit{Option Arguments} (to repeat or not to repeat the test). The solid lines indicate attacks on the arguments, while the dashed lines represent supports. In this context, $T$ denotes task arguments, $CR$ refers to care recipient arguments, and $CG$ signifies caregiver arguments. The numbers following each argument serve to enumerate them. The arguments positioned above the \textit{Option Arguments} belong to the risk of falling observations, while those below are associated with the users.}
        \label{fig:framework_frailty}
    \end{figure*}

    We propose an AF that contains arguments gathered during co-design sessions with healthcare professionals and care recipients from a healthcare facility, along with in situ naturalistic observations during experimentation. The reason for this AF is that, in this particular case, there will be only a single session during which the robot will administer the tests to the patient, limiting the possibilities of learning or understanding the arguments that users can have over time. Before starting the session, both the caregiver and care recipient will select the arguments they agree on. The AF includes the users' arguments and relations in the different possible scenarios in which conflicts can arise, where the caregiver prefers to repeat a test, and the care recipient does not want to repeat it, or vice versa. Depending on the preferences and their reasons, some arguments will be activated and others deactivated. The task arguments are obtained through the PROM questionnaire and the user's performance in the tests, and from the performance in the previous tests. All these arguments together provide information about the user's risk of falling, which is an important factor to consider when deciding whether to repeat a test or not. The AF is depicted in Fig.~\ref{fig:framework_frailty}. 

    The conflicting scenarios are: (i) the caregiver prefers not to repeat the test and the care recipient prefers to repeat it, which includes the arguments $CG1$, $CR1$, $CR2$ and all the task arguments $TX$, and (ii) the caregiver prefers to repeat the test and the care recipient prefers not to repeat it, which includes the rest of the users arguments and the task arguments.

    In the context of frailty assessments, the robot is requested to measure the care recipient's performance in all the tests. During the execution of the tests, the robot makes observations that can be interpreted as arguments. For instance, if a potential risk of falling is detected, a task argument representing that risk is activated (T1), which attacks the repeat option argument and supports the no repeat option. As illustrated in Fig.~\ref{fig:framework_frailty}, the task argument (T1) is activated and can be strengthened by other arguments based on observations. These observations include: (T2) the patient reported very poor physical health in the PROM, (T3) the care recipient lost their balance during the SPPB test in both the together and semi-tandem positions, (T4) the care recipient needed to hold onto something for support during the balance test, and (T5) the performance on the TUG test indicated a risk of falling. Observe that these arguments are obtained incrementally from observations.

    \subsection{Dynamic adaptation examples}
    \label{sec:dynamic_decision_examples}

        A crucial requirement in HRI is their rapid adaptation to dynamic changes in the environment and user preferences. Here, we provide two examples illustrating the framework's adaptation in response to certain changes in the AF. 
        
        The first example involves the addition of new arguments to the AF.
        Indeed, in some scenarios, it is possible that the users want to provide additional arguments to the robots to express or reinforce their preferences, or it is also possible that robots make meaningful observations that should be considered in the preference selection process in the form of new arguments.

        \begin{figure*}[tbp!]
            \centering
            \includegraphics[width=1\linewidth]{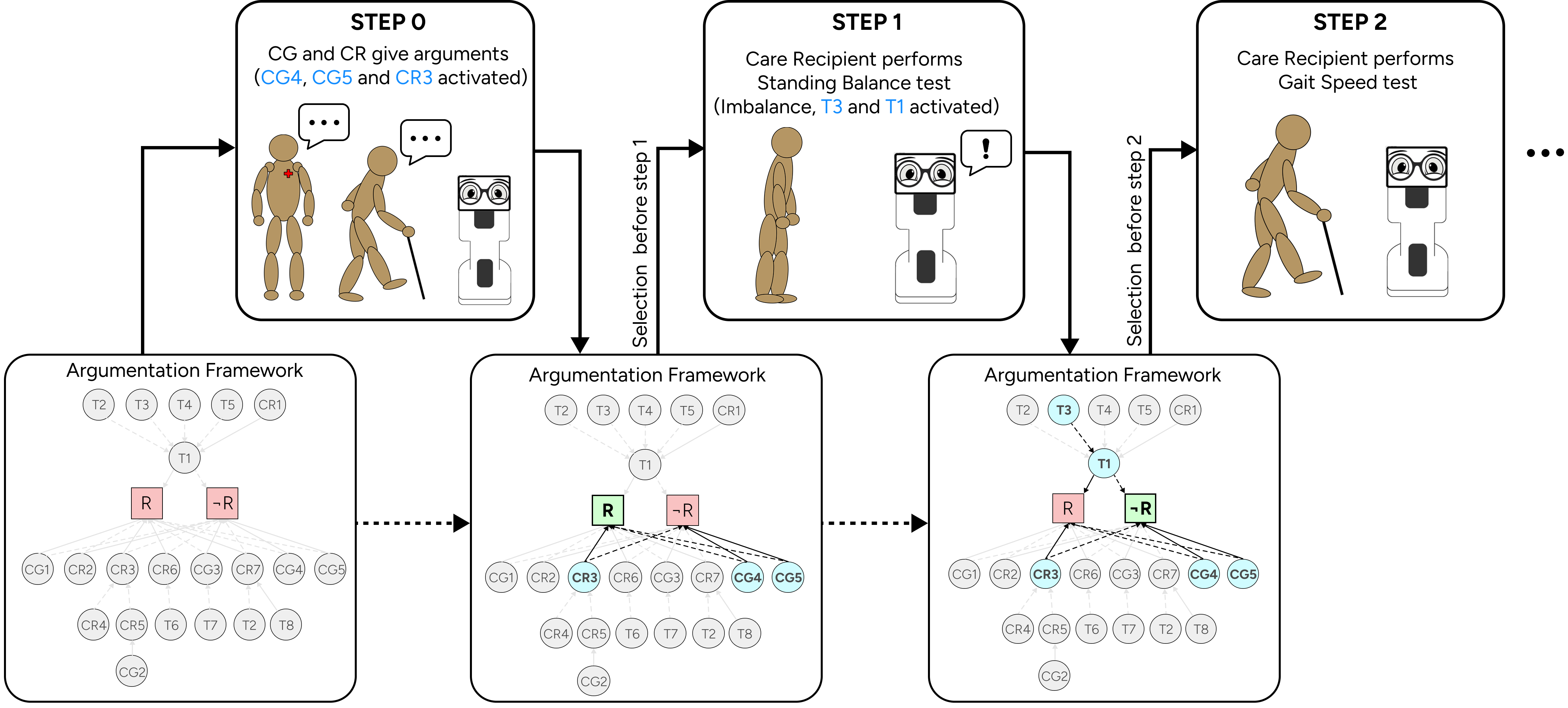}
            \caption{Representation of Example~\ref{example:frailty_adding_arguments}. The caregiver and care recipient give their arguments to the robot, and those are activated (in blue and bold) in the Argumentation Framework. R and $\neg$R are the \textit{Option Arguments} for repeating or not a test. Initially, the robot decides to repeat the tests, indicated by a rectangular argument in green. However, after activating arguments T1 and T3 due to an imbalance detected in the standing balance test, the robot changes its decision and decides not to repeat the tests.}
            \label{fig:example_frailty}
        \end{figure*}

        The example below shows the robot adaptation in such cases (satisfying Property~\ref{property:argument_attribution_discrimination}):

        \begin{example}
            \label{example:frailty_adding_arguments}
            The caregiver believes that the tests should be repeated because the robot sometimes fails to measure them accurately. They want to provide an opportunity for retaking the tests in case of any failures, which often occur when the user does not fully understand how to perform the tests correctly at first (arguments CG4 and CG5 are activated). On the other hand, the care recipient prefers not to repeat the tests due to time constraints (CR3 activated). Initially, all arguments have a base score of 0.5.
            In this scenario, CG4 and CG5 attack \textit{no repeat} and support \textit{repeat}, and CR3 does the opposite. The final strength for \textit{repeat} is $\sigma(r)=0.6$ and for \textit{no repeat} is $\sigma(\neg r)=0.4$ ($\sigma(r) > \sigma(\neg r)$), hence, the robot selects to repeat the test. It is worth noticing that the framework had two arguments for repeat and one for no repeat with the same base score, and then the decision intuitively seems coherent. Now, the care recipient performs the Standing Balance test and loses the balance in the \textit{together} position, indicating a possible risk of falling, which activates T3 and T1. The former supports the latter, which attacks ``repeat'' and supports ``not repeat''. The final strength for repeating is $\sigma(r)=0.495$ while for not repeating, it is $\sigma(\neg r)=0.505$ ($\sigma(\neg r) > \sigma(r)$), leading the robot to choose not to repeat the following tests. This example is represented graphically in Fig.~\ref{fig:example_frailty}.

        \end{example}

        The second example focuses on the modification of existing arguments when their importance shifts.
        Indeed, in some cases, users might want to rectify their preferences or previous arguments, or the robot’s new observations might alter the importance of certain arguments with a corresponding change in their base score. Those changes can be rapidly inserted into the framework by modifying the arguments' base score. The example below shows this adaptation (satisfying Property~\ref{property:base_score_discrimination}): 

        \begin{example}
             
            The caregiver prefers not to repeat the tests due to a packed schedule of visits for the day, so they need to act quickly (CG1 activated). Since this decision is important to the caregiver, the base score for this argument is set at 0.9. On the other hand, the care recipient prefers to repeat the tests, if needed, to ensure the robot accurately measures their frailty (CR2 activated). Given the care recipient's concern, the base score for their argument is set at 0.7.
            The final strength for \textit{repeat} is $\sigma(r)=0.48$, and for \textit{no repeat} is $\sigma(\neg r)=0.52$ ($\sigma(\neg r) > \sigma(r)$), then the robot decides not to repeat the tests if necessary. During the morning, the caregiver receives some calls from the patients cancelling the visits. After checking the waiting room, the caregiver confirms that there are not as many patients as expected. As a result, the caregiver lowers the base score of the CG1 argument to 0.6. The final strengths are now: $\sigma(r)=0.505$ and $\sigma(\neg r)=0.495$ ($\sigma(r) > \sigma(\neg r)$), the robot now selects to repeat the tests if necessary.

        \end{example}

        In summary, the previous examples demonstrate that the argumentation framework quickly adjusts to changes from new arguments or modifications to the base scores of existing ones, without requiring retraining.

    \subsection{Framework evaluation and sensitivity analysis}
    \label{sec:framework_evaluation_and_sensitivity_analysis}
    
    This section analyses the framework's behaviour under varying inputs and configurations, focusing solely on scenarios with conflicting preferences. The arguments considered in this section are those shown in Fig.~\ref{fig:framework_frailty}.
    We define two scenarios.
    In scenario (1), the caregiver opposes repeated actions while the care recipient supports them. There are a total of 7 activatable arguments, including the risk of falling. The total amount of possible combinations with activated or deactivated arguments is 128 ($2^7$). 
    In scenario (2), those preferences are reversed. Here, there are 18 arguments, which make a total of 262144 ($2^{18}$) combinations. Additionally, assigning a base score to each argument makes the number of possible combinations infinite. For this scenario, all arguments have been assigned the same base score of 0.5.

    \begin{table}[t]
        \centering
        \resizebox{0.95\columnwidth}{!}{%
        \begin{tabular}{|c|c|c|c|c|}
        \hline
        \textbf{Scenario} & \textbf{Combinations} & \textbf{R (\%)} & \textbf{N-R (\%)} & \textbf{Tie (\%)} \\
        \hline
        \multirow{3}{*}{1} & All combinations (n=128) & 1.6 & 92.2 & 6.2 \\
        & T1 activated (n=120) & 0.0 & 96.7 & 3.3 \\
        & Without risk of falling (n=4) & 25.0 & 25.0 & 50.0 \\
        \hline
        \multirow{3}{*}{2} & All combinations (n=262144) & 13.4 & 83.2 & 3.4 \\
        & T1 activated (n=245760) & 11.7 & 85.4 & 2.9 \\
        & Without risk of falling (n=8192) & 39.2 & 49.5 & 11.3 \\
        \hline
        \end{tabular}
        }
        \caption{Distribution of preference selections for  Repeat (R), No Repeat (N-R) and Tie across different argument configurations for the two conflicting scenarios. The first and fourth rows correspond to all the possible argument combinations. The second and fifth rows are the combinations where the argument T1 is activated. The third and last rows are the combinations when the risk of falling arguments (the upper part of the AF) are removed. All the arguments' base score is 0.5. R means Repeat, and N-R means No Repeat.}
        \label{table:influence_risk_agent}
    \end{table}

    Table~\ref{table:influence_risk_agent} presents the selection percentages for various combinations of activated and deactivated arguments across both scenarios. The first row lists all possible combinations, the second row focuses on instances when the risk of falling argument (T1) is activated, and the third row shows the distributions when the risk of falling arguments are not included. In this last scenario, while the argument CR1 remains in the framework, it does not influence the final strengths because there is no path connecting it to the \textit{Option Arguments}. 
    From this table, we can observe the impact of the risk of falling argument (T1), which skews the distributions towards ``no repeat'' thereby making the assessment safer based on actual observations of the robot. This effect is particularly noticeable when comparing the values between the first and second rows for each scenario. In contrast, the rows that exclude the risk of falling arguments show a more balanced distribution between repeating and not repeating. This balance is expected since the \textit{Option Arguments} have an equal number of attackers and supporters in both scenarios. The differences observed in the last row regarding repetition are attributable to the indirect attackers and supporters of the \textit{Option Arguments}.
 
    Additionally, it is possible to compute the influence of the relations between arguments on the \textit{Option Arguments}, based on the approach from~\cite{Yin_ijcai24} named \textit{Relation Attribution Explanations (RAEs)}. This analysis shows the influence of each relation on the \textit{Option Arguments}. Table~\ref{table:influence_final_decision} shows an example of influences for scenario (2), since it contains more arguments and is more complete, with all the arguments activated and base scores at 0.5.

    \begin{table}[t]
        \centering
        \resizebox{0.8\columnwidth}{!}{%
            \begin{tabular}{|c|c|c|}
            \hline
            
             \textbf{Relation (type, from, to)} & \textbf{Contribution to $r$} & \textbf{Contribution to $\neg r$} \\
            
            \hline
            (att, T1, $r$) & -0.15 & 0 \\
            \hline
            (att, CR1, T1) & 0.01 & -0.01 \\
            \hline
            (att, CR3, $r$) & -0.14 & 0 \\
            \hline
            (att, CG2, CR5) & 0.00 & -0.00 \\
            \hline
            (att, CR6, $r$) & -0.12 & 0 \\
            \hline
            (att, CG3, $\neg r$) & 0 & -0.12 \\
            \hline
            (att, CR7, $r$) & -0.11 & 0 \\
            \hline
            (att, T8, CR7) & 0.01 & -0.01 \\
            \hline
            (att, CG4, $\neg r$) & 0 & -0.11 \\
            \hline
            (att, CG5, $\neg r$) & 0 & -0.11 \\
            \hline
            (sup, T1, $\neg r$) & 0 & 0.15 \\
            \hline
            (sup, T2, T1) & -0.01 & 0.01 \\
            \hline
            (sup, T3, T1) & -0.01 & 0.01 \\
            \hline
            (sup, T4, T1) & -0.01 & 0.01 \\
            \hline
            (sup, T5, T1) & -0.01 & 0.01 \\
            \hline
            (sup, CR3, $\neg r$) & 0 & 0.14 \\
            \hline
            (sup, CR4, CR3) & -0.01 & 0.01 \\
            \hline
            (sup, CR5, CR3) & -0.01 & 0.01 \\
            \hline
            (sup, CR6,  $\neg r$) & 0 & 0.12 \\
            \hline
            (sup, T6, CR6) & -0.01 & 0.01 \\
            \hline
            (sup, CG3, $r$) & 0.12 & 0 \\
            \hline
            (sup, T7, CG3) & 0.01 & -0.01 \\
            \hline
            (sup, CR7,  $\neg r$) & 0 & 0.11 \\
            \hline
            (sup, T2, CR7) & -0.01 & 0.01 \\
            \hline
            (sup, CG4, $r$) & 0.11 & 0 \\
            \hline
            (sup, CG5, $r$) & 0.11 & 0 \\
            \hline
            \textbf{Final strength} & \textbf{$r$: 0.23} & \textbf{$\neg r$: 0.77} \\
            
            \hline
            \end{tabular}
        }
        \caption{Scenario 2 (caregiver for repeat, care recipient against repeat). Approximate influence of each relation between arguments on the final \textit{Option Arguments} when all arguments are active and with a base score of 0.5. The value 0 is for strict zeros, and 0.00 refers to values $< 5$ · $10^{-3}$.}
        \label{table:influence_final_decision}
    \end{table}

    The results show interesting findings. Using the Quadratic Energy model, the arguments that directly attack or support the \textit{Option Arguments} have a much higher influence on the decisions than the ones that indirectly attack or support them. The attack of the argument CG2 to CR5, which is at 3 argument distance from the \textit{Option Arguments}, has an almost negligible influence. This suggests that adding direct attacks or supports, or incrementing the base score of the closest arguments to the option ones, would be much more effective than doing so on further arguments. The robot can use this information to develop a strategy for generating explanations behind the rationale of its decision, making its decision more understandable to the user, eventually impacting on how they will provide their arguments in the next interaction.

    
    \begin{figure}[t]
        \centering
        \begin{subfigure}{0.48\columnwidth}
            \includegraphics[width=\linewidth]{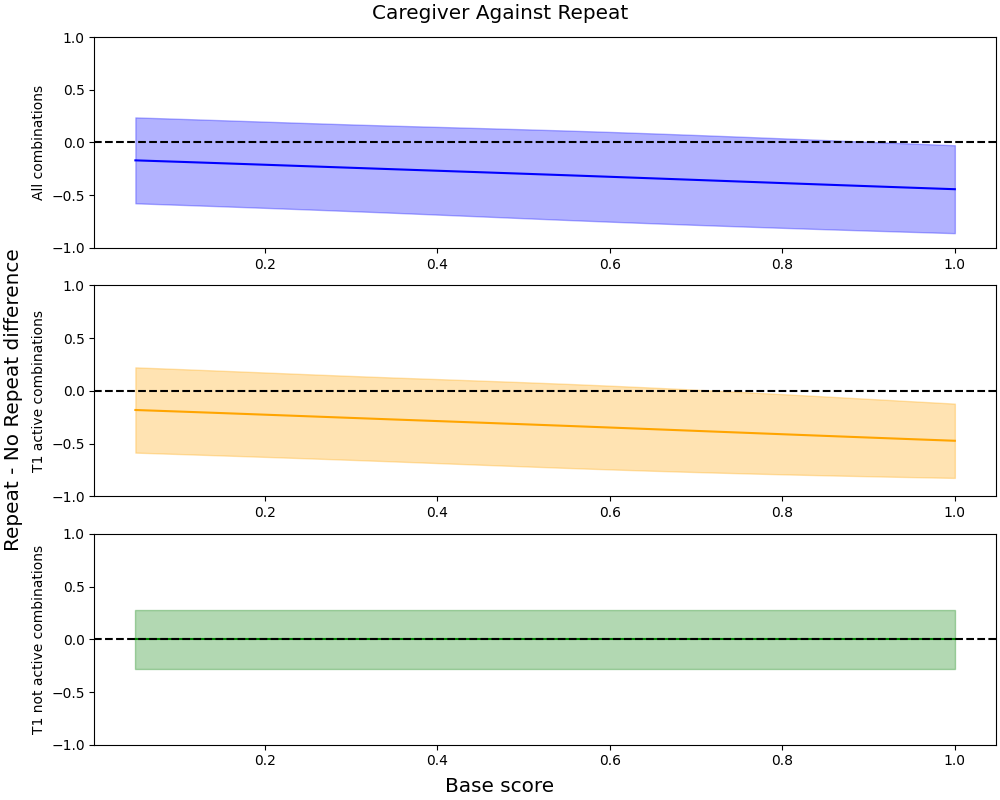}
            \caption{Mean and standard deviation (95\%) distribution of the difference between the final strengths of repeat and no repeat, ($\sigma (r)-\sigma (\neg r)$), for all the combinations in scenario (1). The top plot corresponds to the totality of the combinations (activated and deactivated), the second plot corresponds to the combinations where T1 is activated, and the third one corresponds to the combinations where T1 is deactivated.}
            \label{fig:sensitivity_D1}
        \end{subfigure}
        \hfill
        \begin{subfigure}{0.48\columnwidth}
            \includegraphics[width=\linewidth]{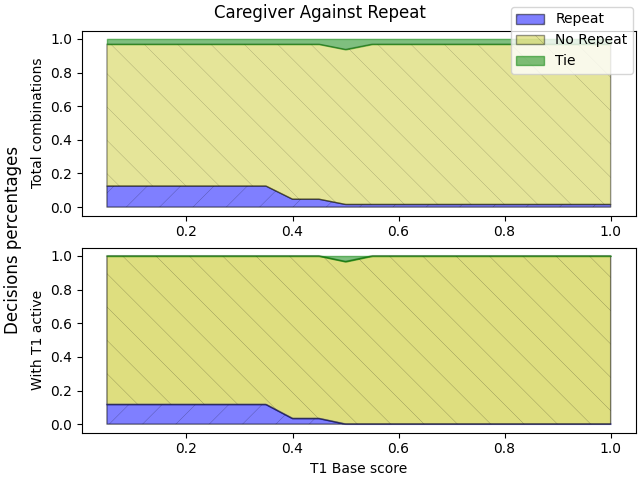}
            \caption{Percentages of the decisions, in scenario (1), where the caregiver is against repeat. The distribution of all combinations is plotted at the top. At the bot, the combinations where the risk of falling argument is active (T1). The blue dashed section refers to the percentages that the robot decided to repeat the tests. The yellow dashed section refers to not repeating. The green area refers to ties.}
            \label{fig:sensitivity_D1_percentages}
        \end{subfigure}

        \caption{Sensitivity analysis of the T1 argument's base score.}
        \label{Bigfig}
        
    \end{figure}

    Finally, we conducted a sensitivity analysis to evaluate how the base score of a critical argument influences the final decision. We varied the base score of the ``risk of falling'' argument (T1), which acts as a parameter to make the robot's decision-making more or less conservative.
    Fig.~\ref{fig:sensitivity_D1} illustrates the results, showing the difference in final strength between the repeat and no repeat options ($\sigma(r)-\sigma(\neg r)$) as the base score of  T1 is increased from 0 to 1 (all other arguments were held at a base score of 0.5). The plot shows a clear trend: as the T1 base score increases, the strength difference in favour of repeating diminishes. This directly impacts the robot's choices, as shown in Fig.~\ref{fig:sensitivity_D1_percentages}. The selection percentage for repeat decreases as the T1 base score grows. This confirms that a higher base score for the ``risk of falling'' argument makes the robot more cautious, eventually reaching a point where it will strictly select the no repeat option when this risk is present.

\section{Limitations and outlook}
    The presented system has proven to be a powerful tool for robots to resolve preference conflicts in HRI contexts. Nonetheless, it is important to acknowledge certain limitations that should be addressed in future work. We have categorised these limitations into two groups: those related to the design of the framework and those pertaining to the interactions that occur while using the system.

    Regarding the AF design, the following limitations are noted: firstly, this work does not implement Argument Mining, which is the field in CA that generates and relates arguments from user input. The arguments in the AF presented in this work were decided through co-design with healthcare professionals. However, in a complete and fully autonomous real-world deployment, Argument Mining will be necessary for getting the users and task arguments. This will become an important challenge for future work, which can be potentially overcome with the usage of LLM~\cite{Gorur_arxiv24}. Secondly, all the base scores in Sec.~\ref{sec:robotic_frailty_assessment} are set to 0.5, except in the sensitivity analysis. In some scenarios, it may seem reasonable for the system to assign different base scores to users' arguments. For instance, users might give arguments that are considered very relevant and important to them~\cite{Battaglia_icsum24,Civit_aamas26}. Those arguments could have a greater base score. Another factor considered when setting the base scores can be the expertise of the users in the context, e.g. in assistive scenarios, health-related arguments given by the caregiver may be more grounded than those given by the care recipient. While this can improve decision quality, it also raises a risk of systemic bias, prioritising caregiver preferences at the expense of the care recipient's autonomy. Thirdly, while this study assumes distinct arguments for each user, scenarios where users share arguments but assign different base scores require further exploration. Methods to aggregate individual AFs and base scores into a general AF should be further explored~\cite{Dickie_AAMAS25}.

    Respecting the system's interaction limitations, we mention the following: firstly, no explanation is generated regarding the reason for the robot's selection. Explanations could be based on the influence of each argument or their relationships to the final decision (see Sec.~\ref{sec:framework_evaluation_and_sensitivity_analysis}). Similarly, if a user complains or questions the reason behind the robot's decision, the robot must provide a clear and understandable explanation to facilitate the necessary feedback for adjusting its decision. To achieve this, it is essential to identify the necessary and sufficient changes and explanations required within the framework ~\cite{Kampik_IJAR24}. Secondly, this work is presented in a particular assistive use case. Even though it is generalisable for any scenario where multiple users have competing preferences, it should be tested and studied in different use cases and in a real-world scenario. Additionally, this work is based on a single-interaction approach, in which a care recipient interacts with the robot once, only to perform the tests. More investigation on how this system behaves in long-term interactions is needed.

    Future directions include extensions of this framework, such as selecting between gradual preferences, which could be obtained by defining a rule based on the final strengths of the \textit{Option Arguments}. Additionally, a safety layer in charge of detecting possible users manipulating the framework outputs could be developed. Since HRI scenarios are mostly user-centred, the users will probably demand explanations and challenge the robot's decision. The robot should be capable of providing those explanations and adapting to the user's feedback, making the system contestable~\cite{Leofante_icpkrr24, Yin_arxiv25}. This feedback can be in the form of new arguments or corrections to the decision. Tools to interpret that feedback and modify or fine-tune the AF will be needed.

\section{Conclusions}


    The literature on personalisation in HRI has mainly focused on adapting the robot's behaviour according to the preferences of a single user. However, when multiple users are involved, their preferences may conflict, leading to complex decision-making scenarios. This challenge has received limited attention in the literature, largely due to the inherent difficulty of identifying satisfactory strategies for resolving such conflicts. 

    This work makes a significant advance towards this gap. We present a novel argumentation framework for multi-user personalisation, the MUP-QBAF, which formally represents the problem, and a novel algorithm which allows solving it. An advantage of using MUP-QBAF in an HRI context is that the AF is generated from users' arguments for their preferences, and from robot observations in the form of arguments, adapting the decision to the users' preferences and the context. The system allows for considering positive, neutral, and negative preferences for different options, overcoming the limitation in HRI of focusing only on the positive ones. The properties and capabilities of the MUP-QBAF in HRI are developed within a practical use case, in which an assistive robot must decide between a care recipient's and a caregiver's preferences for repeating a test or not when performing frailty assessments to older adults. In this example, it is shown that adding new arguments or modifying the base score of existing ones can change the framework's output, adapting to new contexts.
    
    The framework representation as a simple graph allows 
    allows for the rapid introduction of new arguments, facilitating the learning process without the need for retraining, unlike conventional models. This easy adaptation is a potentially useful tool for long-term interactions in HRI. 
    Additionally, it allows for a more transparent way of providing explanations for the robot's decisions.
    Overall, the MUP-QBAF framework provides a significant step toward more adaptive, contestable, and user-centred HRI in multi-user environments.




\section*{Acknowledgment}



\bibliography{sn-bibliography}

\end{document}